\documentclass{article}

\usepackage[preprint]{neurips_2026}
\usepackage{enumitem}
\usepackage{graphicx}
\usepackage{algorithm}
\usepackage{algorithmic}

\usepackage{etoolbox}
\usepackage{xspace}
\usepackage{amsthm}
\usepackage{multirow}
\usepackage{float}
\usepackage{amsmath}
\newtheorem{theorem}{Theorem}
\newtheorem{lemma}[theorem]{Lemma}

\usepackage[utf8]{inputenc} 
\usepackage[T1]{fontenc}    
\usepackage{hyperref}       
\usepackage{url}            
\usepackage{booktabs}       
\usepackage{amsfonts}       
\usepackage{nicefrac}       
\usepackage{microtype}      
\usepackage{xcolor}         

\title{Mesh-RL: Coupled subgrid reinforcement learning}

\author{%
  Behnam Gheshlaghi \\
  Independent Researcher
   \And
   Bahador Rashidi \\
   Independent Researcher
   \And
   Shahin Atakishiyev\thanks{Corresponding author. Email: shahin.atakishiyev@ualberta.ca} \\
   University of Alberta
  }

\begin{document}

\maketitle

\begin{abstract}
Reinforcement learning in large or sparse-reward environments suffers from slow temporal-difference reward propagation, as value information spreads only locally across the state space. We propose \emph{Mesh-RL}, a spatial domain-decomposition framework inspired by the finite element method and domain decomposition theory, which partitions the environment into overlapping subgrids and enforces \emph{boundary-consistent temporal-difference updates}. Such an approach enables localized learning while ensuring globally coherent value propagation. Unlike hierarchical or model-based approaches, Mesh-RL accelerates long-range credit assignment without modifying the reward function, Bellman operator, or introducing explicit planning mechanisms. We evaluate Mesh-RL on hazard-dense grid-world environments with varying geometries and mesh resolutions. Across Q-learning, SARSA, and Dyna-Q, Mesh-RL consistently improves convergence speed, cumulative reward, and learning stability. Higher mesh resolutions sustain exploration, prevent premature convergence, and substantially accelerate value propagation to distant states. While Dyna-Q already benefits from internal planning, it still achieves additional gains under structured decomposition. Overall, Mesh-RL introduces a principled spatial domain-decomposition mechanism for accelerating temporal-difference learning. Our framework bridges finite element method-inspired boundary-consistency techniques from scientific computing with reinforcement learning to improve sample efficiency in sparse-reward environments. We will release source code of the study.
\end{abstract}

\section{Introduction}\label{int}
Temporal-difference (TD) reinforcement learning (RL) remains challenged by slow reward propagation and poor scalability in large or sparse-reward state spaces \citep{tsitsiklis1996analysis, sutton1998reinforcement, schnell2025temporal}. When rewards are delayed or spatially distant, standard TD updates propagate value information only locally, often requiring prohibitively many episodes to achieve accurate long-range credit assignment. Classical acceleration strategies such as prioritized sweeping \citep{moore1993prioritized} improve propagation efficiency by replaying important transitions, yet they remain fundamentally limited by the underlying unstructured state-space topology.

Hierarchical reinforcement learning (HRL) addresses slow propagation through temporal abstraction \citep{parr1997reinforcement, barto2003recent, pateria2021hierarchical, klissarov2025discovering}. Frameworks such as MAXQ \citep{dietterich2000hierarchical} and option-based architectures \citep{barto2003recent, sutton1999between, veeriah2021discovery, arcudi2025multi} decompose policies into temporally extended subroutines that shorten effective planning horizons. While highly effective for temporal credit assignment, these methods do not impose explicit spatial structure on the state space, nor do they provide mechanisms for enforcing consistent value propagation across spatial partitions.

Recent work has also explored alternative mechanisms for accelerating long-range credit assignment in sparse-reward settings. Value Iteration Networks \citep{tamar2016value} embed differentiable planning modules to explicitly propagate reward information across spatial state representations. Reward redistribution methods such as RUDDER \citep{arjona2019rudder} reshape delayed rewards into earlier informative signals, substantially

\begin{figure}[t]
    \centering
    \includegraphics[width=\linewidth]{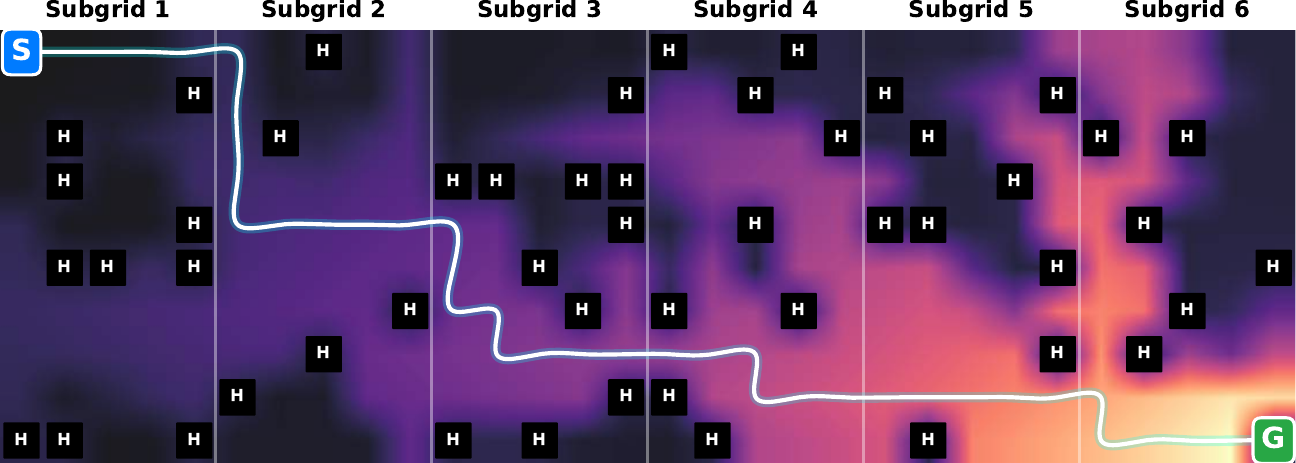}
    \caption{A graphical illustration of Mesh-RL.}
    \label{fig:fig111}
\end{figure}
improving TD learning in long-horizon tasks. Successor feature representations \citep{barreto2017successor} factorize value functions into predictive state occupancies, enabling faster generalization of reward signals across related tasks. Option-critic architectures \citep{bacon2017option} learn temporal abstractions end-to-end, reducing effective planning horizons and improving stability in sparse-reward domains. Exploration-driven planning frameworks such as Go-Explore \citep{ecoffet2019go} demonstrate that structured state-space traversal dramatically improves exploration in hard sparse-reward environments.

While these approaches enhance temporal or representational aspects of credit assignment, they do not explicitly impose spatial domain decomposition with boundary-consistent value coupling. Mesh-RL complements these directions by introducing structured spatial partitioning and deterministic boundary propagation into standard TD learning. In parallel, domain decomposition techniques in numerical analysis and scientific computing have long demonstrated that splitting large spatial domains into overlapping subdomains with boundary consistency constraints yields dramatic acceleration in solving partial differential equations \citep{toselli2004domain, taylor2013finite}. Inspired by this observation, we propose Mesh-RL, an RL framework that introduces structured spatial decomposition into TD learning. Mesh-RL partitions the environment into overlapping subgrids, introduces mandatory-passage boundaries, and performs boundary-aware TD updates that explicitly propagate value information across partitions. Such a learning strategy produces faster and more stable reward diffusion while remaining fully compatible with standard model-free TD algorithms.

Overall, the main contributions of our work can be summarized as follows:
\begin{itemize}[leftmargin=1.5em]
    \item We present Mesh-RL, a novel framework inspired by the finite element method that imposes boundary-consistent TD updates for coherent value propagation between neighboring subgrids, leading to efficient local computation while approximating a global solution;
    \item We show that Mesh-RL consistently improves convergence speed, cumulative reward, and overall learning stability across experiments on Q-learning, SARSA, and Dyna-Q.
\end{itemize}

Our paper is organized as follows. Section 2 covers related work following the Introduction. Section 3 describes how Mesh-RL fills the gap, primarily left by hierarchical RL and prioritized sweeping. The mathematical framework backing Mesh-RL is presented in Section 4. Section 5 outlines our methodology and experimental framework, and underlying empirical results, along with discussion and limitations are covered in Sections 6 and 7, respectively. Finally, Section 8 concludes the paper with the overall findings of the study.

\section{Related Work}

\textbf{Hierarchical and Decomposition-based RL.}
Hierarchical RL frameworks such as MAXQ \citep{dietterich2000hierarchical} and option-based architectures \citep{sutton1999between, barto2003recent} construct temporal abstractions that accelerate learning by decomposing long-horizon problems. Subsequent work on option discovery, nested abstraction, and multi-level HRL further improves temporal structure learning \citep{veeriah2021discovery, arcudi2025multi}. However, these approaches primarily introduce temporal rather than spatial abstraction and do not enforce structured value consistency across spatial regions.

Divide-and-conquer and state-space decomposition methods explicitly partition state spaces to train specialized policies or subgoals \citep{ghosh2017divide, sahni2017state, wong2021state}. Related graph-partition and bottleneck-discovery approaches identify critical states to guide exploration \citep{menache2002q, bacon2014bottleneck}. Yet these methods typically treat partitions as weakly coupled modules, merging policies or value functions after training rather than maintaining continuous boundary synchronization during learning.

\textbf{Reward Propagation Acceleration.}
Several methods aim to accelerate TD reward propagation without altering state-space structure. Prioritized sweeping \citep{moore1993prioritized} selectively replays transitions with high expected impact, while Dyna-style planning integrates model-based rollouts \citep{sutton1990first}. Proto-value functions and spectral methods construct global value bases reflecting state-space geometry \citep{mahadevan2005proto}. Intrinsic motivation and reward shaping accelerate exploration in sparse-reward settings \citep{ng1999policy, hare2019dealing}. These techniques improve efficiency but do not change the fundamental locality of TD propagation in spatially extended environments.

\textbf{Mesh-Structured and Scientific Computing-Inspired RL.}
Recent years have seen growing interest in applying RL to mesh-related scientific problems. Yang et al. \citep{yang2023reinforcement} formulate adaptive mesh refinement as an RL problem. Freymuth et al. \citep{freymuth2023swarm} propose Swarm RL for distributed adaptive mesh refinement using multi-agent coordination. Mojgani et al. \citep{mojgani2023multi} apply multi-agent RL to subgrid-scale closure modeling in fluid simulation. Wang et al. \citep{wang2023enhancing} exploit mesh-structured propagation for imagination-based RL, while Liu et al. \citep{liu2025mesh} introduce reinforcement fine-tuning for 3D mesh generation. These works demonstrate the usefulness of RL on meshes, but they do not use mesh-based domain decomposition to modify the internal mechanics of TD value propagation itself.

In summary, Mesh-RL combines three key elements derived from the finite element method and domain decomposition theory: (a) \textit{structured subgrid partitioning}, which enables modular and localized learning; (b) \textit{goal-directed update sequences}, which prioritize meshes closer to the goal for efficient reward propagation; and (c) \textit{boundary-aware Q-value synchronization}, which ensures global consistency while maintaining independent local learning. As illustrated in Figure~\ref{fig:fig111}, lighter colors represent the successful propagation of value estimates from the goal ($G$) back toward the start ($S$). These values are transported across subgrid boundaries through consistency constraints, allowing distant rewards to ``flow'' upstream more effectively than in unpartitioned spaces. Together, these features enable Mesh-RL to improve sample efficiency, accelerate convergence, and sustain exploration, particularly in sparse-reward environments where standard RL methods often struggle.

\section{How Mesh-RL Fills the Gap}
Mesh-RL introduces a spatially structured decomposition framework with explicit boundary-aware value coupling, filling a gap left by prior HRL and decomposition-based methods. Whereas hierarchical RL constructs temporal abstractions, Mesh-RL constructs spatial subgrids with mandatory-passage boundaries, enabling principled spatial credit assignment rather than purely temporal abstraction. Unlike divide-and-conquer and partition-based RL, which train submodules independently and merge them post hoc, Mesh-RL maintains continuous boundary synchronization through overlapping subgrid interfaces and boundary TD targets, yielding a tightly coupled global value function throughout learning.

Prioritized replay and model-based planning accelerate reward propagation by selecting important transitions \citep{moore1993prioritized}, but they do not modify the underlying spatial structure of propagation. Mesh-RL instead introduces a structural acceleration mechanism: boundary-aware TD updates that immediately transmit downstream value improvements upstream across spatial partitions. This produces faster reward diffusion without modifying the reward function or introducing auxiliary shaping signals.

Finally, while recent mesh-based RL work focuses on mesh refinement, mesh quality optimization, or simulation control \citep{yang2023reinforcement, freymuth2023swarm, mojgani2023multi, liu2025mesh}, Mesh-RL is the first framework to use mesh-inspired domain decomposition as an internal learning principle for RL itself. By treating subgrids as domain-decomposition elements and boundary TD constraints as discrete continuity conditions, Mesh-RL imports a proven scientific computing paradigm into TD learning. Importantly, it remains fully compatible with standard model-free algorithms such as Q-learning, SARSA as well the hybrid approach, Dyna-Q, preserving simplicity while providing a novel, physically inspired mechanism for scalable and spatial RL.

	\section{Mesh-RL: Mathematical Framework and Mesh-Based TD Learning}
	\label{sec:mesh-rl}
	Mesh-RL is inspired by finite element method (FEM), a cornerstone in computational physics and engineering, where complex physical domains are partitioned into smaller subdomains (elements) connected at nodes. In FEM \citep{taylor2013finite,hughes2003finite,bathe2006finite,reddy2005introduction}, governing partial differential equations (PDEs) are approximated locally using interpolation or shape functions, and boundary conditions enforce consistency across elements . This decomposition allows the solution of large, complex problems in a modular and efficient manner. Analogously, in Mesh-RL, the state space of a reinforcement learning environment is divided into $M$ subgrids, referred to as meshes or elements. Each mesh is treated as an independent learning unit where Bellman updates are performed locally. Updates begin in the mesh containing the goal and propagate outward, with boundary values synchronized through averaging to enforce global consistency. After each episode, the $Q$-values from all meshes are stitched together, and the averaged boundary values are redistributed to each mesh for the next learning iteration. This procedure ensures that local learning within each mesh is coupled, allowing structured and accelerated value propagation across the entire environment.
	
	\subsection{MDP Formulation}
	We model the environment as a Markov Decision Process 
	$(\mathcal{S}, \mathcal{A}, P, r, \gamma)$, where $\mathcal{S}$ is the state space (e.g., a 2D grid), 
	$\mathcal{A}$ the action space, $P(s'|s,a)$ the transition kernel, $r(s,a)$ the reward function, 
	and $\gamma \in (0,1)$ the discount factor.
	
	The optimal action-value function satisfies the Bellman optimality equation:
	\begin{equation}
		Q^*(s,a) = \mathbb{E}\Big[r(s,a) + \gamma \max_{a'} Q^*(s',a')\Big],
	\end{equation}
	and standard TD updates are:
	\begin{align}
		Q(s,a) &\leftarrow Q(s,a) + \alpha \delta(s,a),\\
		\delta(s,a) &= r(s,a) + \gamma \max_{a'} Q(s',a') - Q(s,a).
	\end{align}

	\subsection{Mesh Decomposition and Mandatory-Passage Property}
	
	The state space is partitioned column-wise into $M$ overlapping subgrids:
	\begin{equation}
		\mathcal{S} = \bigcup_{i=1}^{M} \mathcal{S}_i, \quad 
		\mathcal{S}_i = \{(r,c) \mid c \in [l_i,r_i)\}.
	\end{equation}
	Each subgrid $\mathcal{S}_i$ maintains a local action-value function $Q_i(s,a)$.
	Adjacent subgrids overlap on boundary regions 
	$\partial \mathcal{S}_i = \mathcal{S}_i \cap \mathcal{S}_{i+1}$.
	
	We consider grid environments where the start state lies in the leftmost subgrid 
	($s_{\text{start}}\in\mathcal{S}_1$) and the goal state lies in the rightmost 
	subgrid ($s_{\text{goal}}\in\mathcal{S}_M$). Since admissible transitions move 
	at most one column per step, any trajectory from $s_{\text{start}}$ to 
	$s_{\text{goal}}$ must cross each intermediate column. Consequently, every 
	successful trajectory intersects each boundary region $\partial \mathcal{S}_i$, 
	establishing a mandatory-passage property across subgrids.

	\subsection{Local and Boundary-Aware TD Updates}

	For transitions contained entirely within subgrid $\mathcal{S}_i$, Mesh-RL performs standard TD updates:
	\begin{equation}
		\begin{split}
			Q_i(s,a) &\leftarrow Q_i(s,a) + \alpha \Big[ \, r(s,a) + \gamma \max_{a'} Q_i(s',a') - Q_i(s,a) \, \Big]
		\end{split}
	\end{equation}
	
	For transitions reaching a boundary state $s_\partial \in \partial \mathcal{S}_i$, 
	downstream subgrid values are incorporated in the TD target:
	\begin{equation}
		\begin{split}
			Q_i(s,a) &\leftarrow Q_i(s,a) + \alpha \Big[ \, r(s,a) + \gamma \max_{a'} Q_{i+1}(s_\partial,a') - Q_i(s,a) \, \Big]
		\end{split}
	\end{equation}
	
	Defining 
	\begin{equation}
		V_i(s) = \max_a Q_i(s,a),
	\end{equation}
	the mandatory-passage property ensures
	\begin{equation}
		V^*(s) = \mathbb{E}\Big[r(s,a) + \gamma V^*(s_\partial)\Big], \quad s \in \mathcal{S}_i,
	\end{equation}
	so accurate downstream boundary estimates propagate value information backward to upstream subgrids.
	
	\subsection{Escape from Suboptimal Plateaus: Boundary Propagation Lemma}
	If a subgrid temporarily converges to a suboptimal local fixed point $Q_i^{sub}$ 
	satisfying negligible local TD error:
	\begin{equation}
		r(s,a) + \gamma \max_{a'} Q_i^{sub}(s',a') - Q_i^{sub}(s,a) \approx 0,
	\end{equation}
	but the downstream boundary improves $V_{i+1}(s_\partial)$, 
	the resulting boundary TD error becomes
	\begin{equation}
		\delta_i^{boundary}(s,a) = r(s,a) + \gamma V_{i+1}(s_\partial) - Q_i^{sub}(s,a),
	\end{equation}
	reintroducing positive updates and enabling escape from local plateaus.
	
	\begin{lemma}[Boundary Propagation]
		If $V_{i+1}^{(t+1)}(s_\partial) \ge V_{i+1}^{(t)}(s_\partial)$, 
		boundary TD updates in $\mathcal{S}_i$ produce nonnegative corrections to $Q_i$.
	\end{lemma}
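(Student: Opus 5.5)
The plan is to compare the boundary TD update driven by the new downstream value $V_{i+1}^{(t+1)}(s_\partial)$ with the one the plateau $Q_i^{sub}$ was already in equilibrium with, namely the one driven by $V_{i+1}^{(t)}(s_\partial)$. I would first make the hypotheses explicit, since the statement is informal. Fix a boundary transition $(s,a)\to s_\partial$ with $s_\partial\in\partial\mathcal{S}_i$. The ``suboptimal local fixed point'' assumption of the preceding subsection will be read at the boundary as
\begin{equation}
r(s,a)+\gamma V_{i+1}^{(t)}(s_\partial)-Q_i^{sub}(s,a)\ \ge\ 0 ,
\end{equation}
i.e.\ the boundary TD error against the old downstream value is zero (the ``$\approx 0$'' taken as equality) or nonnegative. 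In the stochastic-reward case this holds in expectation. Without some assumption of this kind the statement is false: if $Q_i(s,a)$ happens to overshoot its target, the correction is negative no matter how $V_{i+1}$ moves. So the first step is to state this as an explicit hypothesis.

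Next I would write the boundary TD error at time $t+1$ and split off the increment of the downstream value:
\begin{equation}
\delta_i^{boundary,(t+1)}(s,a)=\Big[r(s,a)+\gamma V_{i+1}^{(t)}(s_\partial)-Q_i^{sub}(s,a)\Big]+\gamma\Big[V_{i+1}^{(t+1)}(s_\partial)-V_{i+1}^{(t)}(s_\partial)\Big].
\end{equation}
The first bracket is nonnegative by the fixed-point hypothesis. The second is nonnegative because $\gamma\in(0,1)$ and the lemma assumes $V_{i+1}^{(t+1)}(s_\partial)\ge V_{i+1}^{(t)}(s_\partial)$. The update therefore changes $Q_i(s,a)$ by $\alpha\,\delta_i^{boundary,(t+1)}(s,a)\ge 0$ for any step size $\alpha>0$, which is a nonnegative correction. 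It is strictly positive whenever the downstream value strictly increases, and that strict case is the ``escape from plateaus'' remark.

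I would then extend the conclusion from the single boundary pair to all of $Q_i$. Interior transitions of $\mathcal{S}_i$ use the local target $\max_{a'}Q_i(s',a')$. Since the boundary update only raises entries of $Q_i$, it weakly raises these maxima. A simple induction over the update order (the goal-directed, right-to-left sweep) then keeps every subsequent TD error at a state previously at a local fixed point nonnegative. This uses the monotonicity of $x\mapsto\max_{a'}x_{a'}$.

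The main obstacle is this last extension together with the averaging and synchronization of boundary values. Averaging $Q_i$ and $Q_{i+1}$ on the overlap could in principle lower $Q_i$ there if $Q_i$ exceeded $Q_{i+1}$. I would handle this by restricting the lemma to the TD-update step itself, treating synchronization as a separate operation, and noting that an average of two quantities, each nondecreasing, is itself nondecreasing.
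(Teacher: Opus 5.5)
Your proposal is correct and is the paper's argument made precise. The paper's sketch only observes that the boundary TD target depends linearly, with coefficient $\gamma>0$, on $V_{i+1}(s_\partial)$, and your splitting of $\delta_i^{boundary,(t+1)}(s,a)$ into the old boundary error plus $\gamma\,[V_{i+1}^{(t+1)}(s_\partial)-V_{i+1}^{(t)}(s_\partial)]$ is exactly that observation. You rightly add the explicit hypothesis that the old boundary error is nonnegative (the plateau condition of the preceding subsection), which the sketch uses silently and without which an overshooting $Q_i(s,a)$ gives a negative correction. Your extension to interior entries goes beyond the lemma, and for entries updated more than once it also needs $\alpha\le 1$, so that an update never overshoots its own target.
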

	\begin{proof}[Proof sketch]
		TD targets depend linearly on $V_{i+1}(s_\partial)$; 
		thus, increasing downstream boundary values increases upstream TD targets, 
		yielding nonnegative updates.
	\end{proof}
	
	\subsection{Global Stitched Value Function}
	After local updates, the global value function is obtained by averaging overlapping subgrid estimates:
	\begin{equation}
		Q_G(s,a) = \frac{1}{|\{i : s \in \mathcal{S}_i\}|} 
		\sum_{i: s \in \mathcal{S}_i} Q_i(s,a),
	\end{equation}
	which is then synchronized back to subgrids before the next learning episode, enforcing global boundary consistency.
	
	\subsection{Bellman Fixed-Point Preservation and Convergence}
	Let $T$ denote the standard Bellman operator:
	\begin{equation}
		(TQ)(s,a) = r(s,a) + \gamma \mathbb{E}_{s'} \Big[\max_{a'} Q(s',a')\Big].
	\end{equation}
	
	\textbf{Proposition 1 (Fixed-Point Preservation).}
	Any fixed point of the Mesh-RL stitching update satisfies the Bellman optimality equation
	\[
	Q_G = T Q_G .
	\]
	
	\emph{Proof.}
	Each subgrid $\mathcal{S}_i$ maintains a local Q-function updated by its Bellman operator
	\[
	(T_i Q_i)(s,a)=r(s,a)+\gamma \mathbb{E}_{s'}[\max_{a'}Q_i(s',a')].
	\]
	Assume local convergence so that $Q_i = T_i Q_i$ for all $i$.  
	The global stitched Q is defined on overlapping states by
	\[
	Q_G(s,a)=\frac{1}{|\mathcal{I}(s)|}\sum_{i\in\mathcal{I}(s)}Q_i(s,a),
	\quad \mathcal{I}(s)=\{i: s\in\mathcal{S}_i\}.
	\]
	
	On overlapping states, all local Q-functions converge to the same optimal value:
	\[
	Q_i(s,a) = Q^*(s,a), \quad \forall i \in \mathcal{I}(s).
	\]
    
	Averaging identical values leaves them unchanged, so
	\[
	Q_G(s,a) = Q^*(s,a).
	\]

    \begin{figure}[t]
    \centering
    \includegraphics[width=0.90\linewidth]{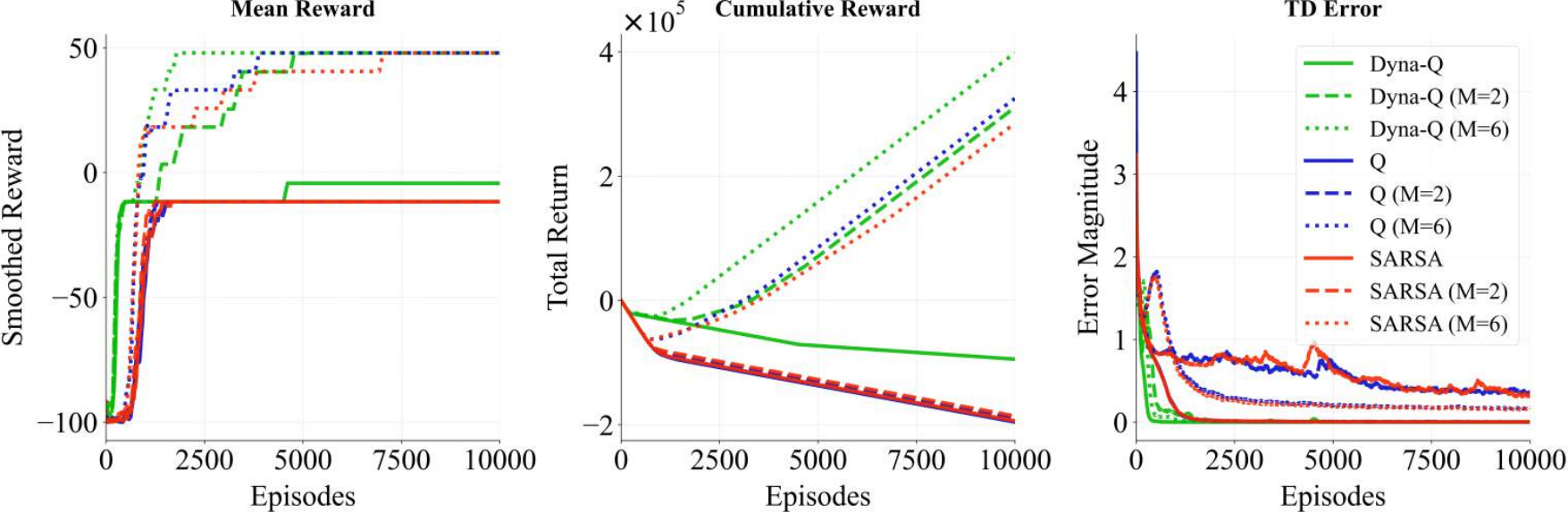}
    \caption{Performance on a $10\times30$ grid with 50 random holes ($M=2$ and $M=6$) and baseline methods.}
    \label{fig:fig2}
    \end{figure}
	
	Since \(Q^*\) satisfies the Bellman optimality equation \(Q^* = T Q^*\), it follows that
	\[
	Q_G = T Q_G.
	\]
	\hfill $\square$

\textbf{Proposition 2 (Contraction Property).}  
	The Mesh-RL operator $T_M$, which consists of local Bellman updates, boundary substitution, 
	and stitching on overlapping states, is a $\gamma$-contraction under the max norm:
	\[
	\|T_M Q - T_M Q'\|_\infty \le \gamma \|Q - Q'\|_\infty .
	\]
	
	\emph{Sketch of proof.}  
	Each local Bellman update is a $\gamma$-contraction.  
	The stitching step averages values on overlapping states, which cannot increase the maximum difference between two Q-functions.  
	Therefore, the overall Mesh-RL operator $T_M$ is also a $\gamma$-contraction.
	\hfill $\square$
	
	\textbf{Corollary.}  
	Under standard stochastic approximation conditions (finite MDP, Robbins–Monro learning rates, sufficient exploration), 
	Mesh-RL converges almost surely to the optimal action-value function $Q^*$, because $T_M$ shares the same fixed point as $T$ and is a contraction.
	
	\subsection{Relation to Classical Bellman Iteration}
	Standard Q-learning performs
	\begin{equation}
		Q^{(k+1)} = T Q^{(k)},
	\end{equation}
	while Mesh-RL performs
	\begin{equation}
		Q^{(k+1)} = T_M Q^{(k)},
	\end{equation}
	accelerating spatial propagation through domain decomposition and boundary synchronization, 
	analogous to Schwarz methods in PDE solvers. Thorough details of stitched TD learning are presented in Algorithm 1.

\section{Experiments}
The empirical evaluation of Mesh-RL examines the efficacy of this mesh-based decomposition when applied to classical RL algorithms. The approach draws parallels with domain decomposition methods in numerical PDE solvers, where complex problems are divided into manageable subdomains solved independently and then coupled via boundary conditions \citep{ toselli2004domain, gropp1995parallel}. Such methods are known to improve convergence speed, maintain solution consistency, and allow parallel computation, all of which inspire the design of Mesh-RL.

For baseline comparison, we evaluate Mesh-RL in conjunction with three fundamental RL algorithms. Q-Learning is an off-policy TD method that iteratively estimates the optimal action-value function via Bellman updates \citep{watkins1989learning, watkins1992q}. SARSA is an on-policy TD control method that updates action-values based on the action actually taken, making it more conservative in exploration \citep{ sutton1998reinforcement, rummery1994line}. Dyna-Q combines direct RL with planning through simulated experience, using $N$ planning steps per real interaction \citep{sutton1990integrated}. Dyna-Q’s hybrid nature allows it to propagate rewards efficiently even with fewer subgrid divisions, whereas Q-Learning and SARSA benefit more from structured mesh updates.

\begin{figure}[t]
    \centering
    \includegraphics[width=0.6\linewidth]{figs/fig2.png}
    \caption{Value heatmaps on a $10\times30$ grid with 50 holes, for baselines and Mesh-RL ($M=2$ and $M=6$).}
    \label{fig:fig3}
\end{figure}

The mesh decomposition strategy follows a systematic spatial partitioning approach: for $M$ meshes, the environment is divided into $M$ contiguous subregions, each containing approximately $N/M$ states, where $N$ is the total number of states. This partitioning aligns with principles from parallel computing and domain decomposition, where proper subdomain design is crucial for convergence and efficient information propagation \citep{toselli2004domain}.

Within each mesh, local Bellman updates proceed independently, while boundary condition enforcement ensures global consistency. Q-values at mesh interfaces are synchronized via averaging, analogous to Schwarz-type domain decomposition methods \citep{gropp1995parallel,gander2008schwarz}. Updates follow a goal-directed sequence: meshes containing the goal state are updated first, followed by upstream propagation toward the start state \citep{boyan1994generalization}. This ensures that early-stage learning efficiently propagates reward information throughout the environment, enhancing exploration and accelerating convergence.

All experiments employ consistent hyperparameters to maintain fairness: learning rate $\alpha=0.1$, discount factor $\gamma=0.99$, $\epsilon$-greedy exploration with $\epsilon=0.1$, step penalty $-1$, hole penalty $-10$, goal reward $+100$, episode budget per mesh = $100$, 10 independent seeds with different random initializations, and a 100-episode moving average for visualization. The sparse reward landscape, characterized by a dominant goal reward and substantial penalties for missteps, emphasizes the need for efficient exploration and credit assignment. This design ensures that Mesh-RL’s structured decomposition and boundary-aware updates can be rigorously evaluated for their ability to accelerate learning, improve reward propagation, and maintain stability across multiple reinforcement learning methods. All experiments have been performed on an Intel Core i9-11950H (2.60 GHz) CPU.

\section{Results and Analysis}

\subsection{10×30 Grid with 50 Holes}

The first experimental setup considers a 10×30 grid with 50 randomly placed holes, simulating a sparse-reward environment. We evaluated Mesh-RL with two mesh resolutions, $M=2$ and $M=6$, across Q-Learning, SARSA, and Dyna-Q. Figure~\ref{fig:fig2} shows the mean reward per episode averaged over 10 seeds. For $M=6$, Dyna-Q achieves the highest rewards most rapidly, consistent with its built-in planning efficiency. However, SARSA and Q-Learning also exhibit substantial improvements with $M=6$ compared to $M=2$, indicating that increased mesh resolution enhances both the speed and quality of learning. The larger number of meshes enables more localized updates, while boundary

\begin{figure}[t]
    \centering
    \includegraphics[width=0.90\linewidth]{figs/fig3.png}
    \caption{Performance on a $20\times20$ grid with 50 random holes ($M=2$ and $M=6$) and baseline methods.}
    \label{fig:fig4}
\end{figure}
synchronization ensures coherent propagation of reward information across the grid, effectively reducing the learning horizon to distant states. Cumulative reward plots reinforce this observation: higher mesh resolution leads to larger total reward accumulation. TD-error dynamics reveal that Mesh-RL maintains higher TD errors during early learning stages, reflecting sustained exploration and ongoing refinement of Q-values across subgrids. This contrasts with baseline methods, which often show early TD-error collapse and converge prematurely to suboptimal policies. Maintaining an elevated TD-error allows the agent to continue exploring promising trajectories while efficiently propagating information through the structured mesh.

Figure~\ref{fig:fig3} presents Q-value heatmaps for this\label{key} environment. For $M=6$, Mesh-RL clearly demonstrates smoother and more extensive propagation of value information across the grid. Mesh boundaries are visible in Q-Learning and SARSA, reflecting localized updates within each subgrid, yet the stitched boundary values produce globally consistent Q-values. Dyna-Q shows minimal differences between $M=2$ and $M=6$, as its planning steps already propagate reward information efficiently. These heatmaps highlight that structured decomposition benefits algorithms that rely on iterative TD updates, enabling faster learning and more uniform exploration.
\subsection{20×20 Grid with 50 Holes}

To further test Mesh-RL in a more balanced and moderately dense environment, we used a 20×20 grid with 50 random holes. Figure~\ref{fig:fig4} shows the mean reward per episode for $M=2$ and $M=6$. Across all methods, $M=6$ consistently accelerates convergence, allowing Q-Learning and SARSA to reach maximum rewards by episode 2000. Dyna-Q remains largely unaffected, as its planning mechanism efficiently propagates reward information even with fewer meshes. TD-error curves indicate that higher mesh resolution sustains elevated errors longer, confirming that subgrid decomposition encourages continued exploration while preventing premature convergence to suboptimal policies.

Figure~\ref{fig:fig5} presents Q-value heatmaps. For Q-Learning and SARSA, $M=6$ clearly enhances value propagation, producing smooth and globally consistent Q-fields. Even in early episodes, subgrid boundaries guide the propagation of value information from the goal to distant regions. This structured information transfer accelerates learning and allows the agent to anticipate rewards along optimal paths more efficiently. Dyna-Q exhibits minimal sensitivity to mesh resolution, highlighting that planning algorithms are inherently less dependent on structured decomposition.

Together, the reward curves, cumulative reward plots, TD-error dynamics, and heatmaps demonstrate that increasing mesh resolution systematically improves learning performance for TD-based methods. Higher mesh resolutions enable faster reward accumulation, more stable exploration, and effective propagation of Q-values across the environment, while Dyna-Q’s performance is largely unaffected due to its planning capability. These findings underscore the utility of Mesh-RL for algorithms that rely on incremental value updates, particularly in sparse-reward and challenging exploration settings.

\begin{figure}[t]
    \centering
    \includegraphics[width=0.8\linewidth]{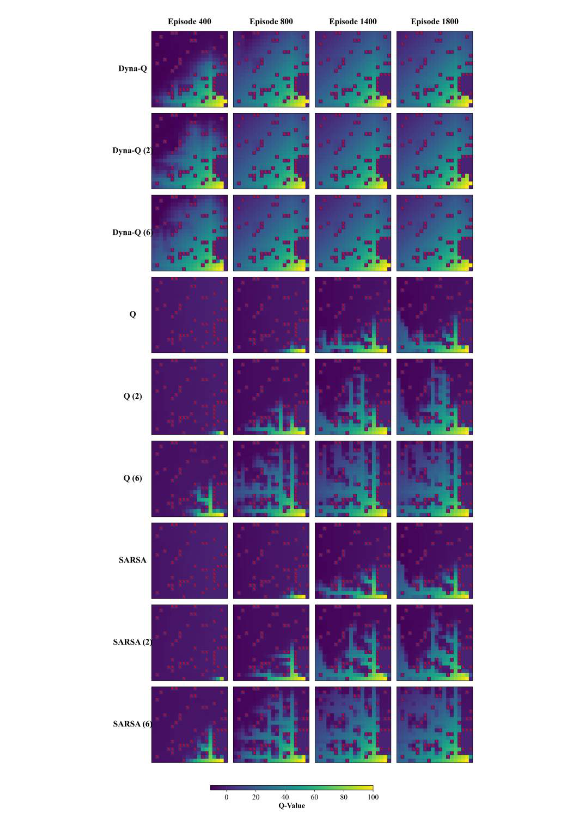}
    \caption{Value heatmaps on a $20\times20$ grid with 50 holes, for baselines and Mesh-RL ($M=2$ and $M=6$).}
    \label{fig:fig5}
\end{figure}

\subsection{Summary of Mesh Resolution Effects}

Across all experimental setups, increasing the mesh resolution from $M=2$ to $M=6$ consistently improves performance for TD-based methods. The benefits include:
\begin{itemize}
	\item \textbf{Faster reward accumulation:} Higher mesh resolution accelerates convergence by propagating values more efficiently across the environment.
	\item \textbf{Sustained TD-error:} Maintaining elevated TD-error early in learning ensures continued exploration and prevents premature convergence.
	\item \textbf{Structured Q-value propagation:} Heatmaps demonstrate that subgrid boundaries guide learning while stitched boundary values ensure global consistency.
	\item \textbf{Algorithm-dependent effects:} Mesh-RL provides the largest benefits for Q-Learning and SARSA, whereas Dyna-Q is relatively insensitive due to planning.
\end{itemize}
\begin{table}[h]
    \centering
    \caption{Aggregated Performance Summary (10 Seeds, 10,000 Episods): $20\times20$ vs. $10\times30$ Grid Environments (50 Holes)}
    \label{tab:tbl}
    \resizebox{\textwidth}{!}{%
        \begin{tabular}{@{}ll rrrrrrrr@{}}
            \toprule
            \textbf{Method} & \textbf{Grid} & \textbf{Accumulated\_Reward} & \textbf{Avg\_Reward} & \textbf{Final\_Reward} & \textbf{Peak\_Reward} & \textbf{Peak\_Ep} & \textbf{Peak\_Time} & \textbf{Total\_Time} & \textbf{Total\_Actions} \\ \midrule
            \multirow{2}{*}{Dyna-Q} & 20x20 & 531,434.3 & 53.14 & 63.0 & 63.0 & 620 & 23.99 & 457.42 & 271,939 \\
            & 10x30 & -94,400.0 & -9.44 & -4.3 & -4.3 & 620 & 13.56 & 247.79 & 45,547 \\ \cmidrule(l){2-10} 
            \multirow{2}{*}{Dyna-Q ($M=2$)} & 20x20 & 551,401.8 & 55.14 & 63.0 & 63.0 & 580 & 35.10 & 613.19 & 466,391 \\
            & 10x30 & 310,500.0 & 31.05 & 47.9 & 47.9 & 2287 & 72.48 & 343.60 & 362,937 \\ \cmidrule(l){2-10} 
            \multirow{2}{*}{\textbf{Dyna-Q ($M=6$)}} & 20x20 & \textbf{566,416.5} & \textbf{56.64} & \textbf{63.0} & \textbf{63.0} & \textbf{383} & 46.20 & 2142.99 & 1,227,903 \\
            & 10x30 & \textbf{398,200.0} & \textbf{39.82} & \textbf{47.9} & \textbf{47.9} & \textbf{906} & 44.18 & 518.08 & 939,048 \\ \midrule
            \multirow{2}{*}{Q} & 20x20 & -175,022.3 & -17.50 & 9.3 & 16.3 & 2019 & 72.18 & 342.29 & 213,367 \\
            & 10x30 & -195,500.0 & -19.55 & -11.7 & -11.7 & 511 & 11.92 & 219.94 & 53,679 \\ \cmidrule(l){2-10} 
            \multirow{2}{*}{Q ($M=2$)} & 20x20 & 105,416.1 & 10.54 & 47.5 & 55.0 & 2592 & 93.35 & 339.63 & 467,413 \\
            & 10x30 & -189,100.0 & -18.91 & -11.7 & -11.7 & 515 & 14.27 & 238.70 & 281,450 \\ \cmidrule(l){2-10} 
            \multirow{2}{*}{Q ($M=6$)} & 20x20 & 454,641.1 & 45.46 & 63.0 & 63.0 & 963 & 42.72 & 427.09 & 1,300,731 \\
            & 10x30 & 324,900.0 & 32.49 & 47.9 & 47.9 & 1354 & 33.16 & 242.43 & 1,055,048 \\ \midrule
            \multirow{2}{*}{SARSA} & 20x20 & -245,010.3 & -24.50 & 9.1 & 16.9 & 3227 & 108.57 & 322.13 & 200,015 \\
            & 10x30 & -193,600.0 & -19.36 & -11.7 & -11.7 & 489 & 11.95 & 225.28 & 53,687 \\ \cmidrule(l){2-10} 
            \multirow{2}{*}{SARSA ($M=2$)} & 20x20 & -10,918.9 & -1.09 & 39.8 & 47.1 & 2216 & 73.29 & 342.08 & 440,665 \\
            & 10x30 & -185,700.0 & -18.57 & -11.7 & -11.7 & 442 & 11.20 & 240.24 & 327,263 \\ \cmidrule(l){2-10} 
            \multirow{2}{*}{SARSA ($M=6$)} & 20x20 & 458,163.2 & 45.82 & 63.0 & 63.0 & 944 & 40.82 & 424.21 & 1,291,357 \\
            & 10x30 & 285,000.0 & 28.50 & 47.9 & 47.9 & 1805 & 39.76 & 234.55 & 1,035,946 \\ \bottomrule
        \end{tabular}%
    }
\end{table}

These observations confirm that the combination of subgrid partitioning, goal-directed updates, and boundary synchronization in Mesh-RL provides a systematic and principled mechanism for accelerating RL in sparse-reward and challenging environments.
\subsection{Final Reward}
As shown in Table~\ref{tab:tbl}, we first examine the final reward achieved after 10{,}000 training episodes. On the 20$\times$20 grid, the maximum attainable final reward is 63. For Dyna-Q-based methods, this optimal value is already achieved by the baseline planner, and introducing two or six meshes does not further improve asymptotic performance, indicating that classical Dyna-Q planning already provides strong reward propagation. In contrast, Q-learning and SARSA show substantial gains as the number of meshes increases. With six meshes, both methods reach the optimal final reward of 63, whereas their baseline counterparts remain far below this level. The same trend holds on the 10$\times$30 grid, demonstrating that MeshRL significantly enhances value propagation and credit assignment for model-free methods.

\subsection{Peak Reward and Convergence Speed}
We next analyze peak reward and the episode at which it is reached. For the 20$\times$20 grid, Dyna-Q with six meshes attains its peak reward after only 383 episodes, and for the 10$\times$30 grid after 906 episodes, indicating rapid convergence. For Q-learning and SARSA, increasing the mesh count both raises the achievable peak reward and substantially reduces the number of episodes required to reach it. This shows that MeshRL not only improves ultimate performance but also accelerates the learning dynamics, enabling model-free agents to approach optimal policies significantly faster.

\subsection{Computational Efficiency}
Finally, we evaluate peak time, total runtime, and total action counts. While MeshRL increases the number of executed actions due to subgrid exploration (e.g., over 1.2 million actions for six-mesh variants versus roughly 0.2--0.3 million for baselines), it often reduces wall-clock time to reach peak performance. For example, on the 20$\times$20 grid, Q-learning with six meshes reaches the optimal reward in 42.7 seconds, whereas baseline Q-learning fails to reach this reward even after 342 seconds of training. Similar trends are observed for SARSA. For Dyna-Q, peak time moderately increases with mesh count due to additional subgrid coupling, but without affecting final performance. Overall, Mesh-RL achieves faster convergence and higher rewards, particularly for model-free methods, while maintaining practical computational cost.

\begin{algorithm}[h]
	\caption{Stitched TD Learning with Subgrid Decomposition}
	\label{alg:mesh_rl}
	\begin{algorithmic}[1]
		\STATE \textbf{Initialize:} Grid $G(R, C)$, Episodes $E$, Base Max Steps $L$, Subgrids $n$, Learning rate $\alpha$, Discount $\gamma$
		\STATE Initialize $Q_{stitch}$ and $Q_{1 \dots n}$ to zeros
		
		\FOR{episode $e = 1$ to $E$}
		\STATE // \textit{Stage 1: Local Subgrid Learning (Right to Left)}
		\FOR{subgrid $i = n$ down to $1$}
		\STATE Determine local $s_{start}$ and $s_{goal}$ (boundary-dependent)
		\STATE $local\_limit \gets \lfloor L / n \rfloor$
		\STATE $s \gets s_{start}$
		
		\FOR{step $t = 1$ to $local\_limit$}
		\STATE $a \gets \epsilon\text{-greedy}(Q_i, s)$
		\STATE $s', r, done \gets \text{EnvironmentStep}(s, a)$
		
		\IF{not $done$}
		\STATE $Target \gets r + \gamma \max_{a'} Q_i(s', a')$
		\ELSE
		\STATE $Target \gets r$
		\ENDIF
		
		\STATE $Q_i(s, a) \gets Q_i(s, a) + \alpha (Target - Q_i(s, a))$
		
		\IF{method is DYNA}
		\STATE Perform $PlanningSteps(Q_i, Model)$
		\ENDIF
		
		\STATE $s \gets s'$
		\IF{$s = s_{goal}$ or $done$} 
		\STATE \textbf{break} 
		\ENDIF
		\ENDFOR
		
		\IF{$i > 1$}
		\STATE Copy shared boundary $Q_i$ to $Q_{i-1}$
		\ENDIF
		\ENDFOR
		
		\STATE // \textit{Stage 2: Stitching and Synchronization}
		\STATE $Q_{stitch} \gets \text{Average}(Q_{1 \dots n})$ for all overlapping states
		\FOR{subgrid $i = 1$ to $n$}
		\STATE $Q_i \gets Q_{stitch}$ 
		\ENDFOR
		
		\STATE // \textit{Stage 3: Global Evaluation}
		\STATE Rollout greedy policy from global $START$ using $Q_{stitch}$ for $L$ steps
		\ENDFOR
	\end{algorithmic}
\end{algorithm}

 \section{Discussion and Limitations}

Mesh-RL is motivated by the observation that TD learning propagates value information locally, leading to slow reward diffusion in large or sparse-reward environments. By decomposing the state space into overlapping subgrids and
enforcing boundary-consistent TD updates, Mesh-RL introduces structured long-range communication between distant regions while preserving the simplicity of standard TD learning. Our empirical results show that finer mesh resolutions systematically accelerate convergence and improve stability, confirming that controlled boundary synchronization effectively mitigates delayed credit assignment.

The interaction between Mesh-RL and planning-based methods such as Dyna-Q provides additional insight. While Dyna-Q benefits from simulated experience, Mesh-RL consistently yields further gains, indicating that spatial decomposition and model-based planning address complementary aspects of value propagation. Visualizations of value functions and TD-error dynamics suggest that Mesh-RL reduces stagnation in distant regions of the state space and prevents premature convergence to suboptimal policies.

Despite these promising results, being a preliminary study, Mesh-RL focuses on tabular TD methods in discrete grid-world environments, where this aspect is a limitation of our work. Mesh construction parameters, including subgrid size and overlap width, are manually specified, and the current formulation does not yet address function approximation, continuous control, or highly stochastic dynamics. Hence, as future work, we will extend Mesh-RL to deep RL with neural value approximation, investigate adaptive or learned mesh partitioning strategies, and develop theoretical convergence analyses under general approximation settings. Complementing latter, applying Mesh-RL to large-scale navigation and robotics domains is a promising direction in this regard, which we also leave for future exploration.

\section{Conclusion}

In this paper, we have presented \emph{Mesh-RL}, an RL framework that accelerates TD learning through spatial domain decomposition and boundary-aware value propagation. By partitioning the state space into overlapping subgrids and enforcing boundary-consistent TD updates, Mesh-RL enables localized learning while maintaining globally coherent value functions, improving long-range credit assignment without modifying the reward function or Bellman operator. Experiments on $10\times30$ and $20\times20$ grid-worlds with sparse rewards demonstrate that increasing mesh resolution systematically improves convergence speed, cumulative reward, and learning stability for TD-based methods such as Q-learning and SARSA, while Dyna-Q remains strong due to its internal planning. Value heatmaps and TD-error dynamics further confirm that structured boundary synchronization accelerates reward diffusion across distant regions of the state space. Overall, Mesh-RL opens a new avenue for bridging classical domain-decomposition principles from scientific computing with RL, providing a simple and general mechanism for scalable spatial value propagation. 

\section*{Impact Statement}
This paper presents work aimed at advancing the field of machine learning by proposing improved optimization and normalization mechanisms for reinforcement learning. There are many potential societal consequences of advances in machine learning; however, we do not identify any immediate negative ethical or societal risks arising uniquely from this work beyond those commonly associated with general-purpose machine learning research.

\bibliographystyle{unsrt}
\bibliography{main}

\end{document}